%% file: main.tex
\documentclass[11pt]{article}

\usepackage[letterpaper,left=1in,right=1in,top=1in,bottom=1in]{geometry}
\usepackage[T1]{fontenc}
\usepackage{mathptmx}
\usepackage{courier}
\usepackage{amsmath, amssymb, amsthm}
\usepackage{booktabs}
\usepackage{colortbl}
\usepackage{enumitem}
\usepackage{graphicx}
\usepackage{float}
\usepackage[font=small,labelfont=bf]{caption}
\usepackage{microtype}
\usepackage{titlesec}
\usepackage{tikz}
\usepackage{varwidth}
\usetikzlibrary{arrows.meta, positioning, fit, backgrounds}
\usepackage[numbers,sort&compress]{natbib}
\usepackage[colorlinks=true,linkcolor=black,citecolor=black,urlcolor=black]{hyperref}
\usepackage[most]{tcolorbox}

\graphicspath{{figures/}}

\setlist[enumerate]{topsep=0.25em,itemsep=0.2em,leftmargin=1.6em}
\setlist[itemize]{topsep=0.2em,itemsep=0.15em,leftmargin=1.4em}
\titlespacing*{\section}{0pt}{1.1em}{0.45em}
\titlespacing*{\subsection}{0pt}{0.8em}{0.3em}
\titleformat{\section}{\normalfont\large\bfseries}{\thesection}{0.8em}{}
\titleformat{\subsection}{\normalfont\normalsize\bfseries}{\thesubsection}{0.8em}{}

\theoremstyle{plain}
\newtheorem{theorem}{Theorem}
\newtheorem{proposition}{Proposition}

\newtheorem{corollary}{Corollary}
\theoremstyle{definition}

\newcommand{\E}{\mathbb{E}}
\newcommand{\Prob}{\mathbb{P}}

\newcommand{\Adv}{\hat{A}}

\definecolor{cnavy}{HTML}{1F4E79}
\definecolor{cink}{HTML}{16202B}
\definecolor{cpanel}{HTML}{EEF1F4}
\definecolor{carrow}{HTML}{8A94A0}
\definecolor{cclay}{HTML}{B5654A}

\newtcolorbox{takeaway}{enhanced, center, width=0.92\linewidth, colback=cpanel, colframe=cnavy,
  boxrule=0pt, leftrule=2.4pt, arc=2pt, left=11pt, right=11pt, top=7pt, bottom=7pt,
  fontupper=\small, before skip=0.8\baselineskip, after skip=0.8\baselineskip}

\newcommand{\papertitle}{GRPO, Dr.\,GRPO, and DAPO Are Three Operations on One Number:\\[2pt] The Group-Standard-Deviation Identity}
\hypersetup{pdftitle={GRPO, Dr. GRPO, and DAPO Are Three Operations on One Number: The Group-Standard-Deviation Identity},
  pdfauthor={Yong Yi Bay and Kathleen A. Yearick},
  pdfkeywords={GRPO, reward normalization, standard deviation, silent groups, difficulty bias, group size, dynamic sampling, RLVR, LLM reasoning}}
\newsavebox{\papertitlebox}
\newcommand{\paperfrontmatter}{%
\begin{center}
\vspace*{-0.62in}
\begin{lrbox}{\papertitlebox}%
\begin{varwidth}{\textwidth}
\centering
{\Large\bfseries \papertitle\par}
\end{varwidth}%
\end{lrbox}%
\rule{\wd\papertitlebox}{1.1pt}\par
\vspace{0.55em}
\usebox{\papertitlebox}\par
\vspace{0.55em}
\rule{\wd\papertitlebox}{1.1pt}\par
\vspace{1.7em}
{\textbf{Yong Yi Bay}\textsuperscript{*}\hspace{2.4em}\textbf{Kathleen A. Yearick}\textsuperscript{*}\par}
\vspace{0.5em}
PhD, University of Illinois at Urbana-Champaign\par
\let\thefootnote\relax\footnotetext{\textsuperscript{*}Equal contribution. Correspondence: \texttt{\{yongyibay, kallie.a.yearick\}@gmail.com}.}
\vspace{1.9em}
\end{center}
}

\newenvironment{paperabstract}
{\begin{center}\textbf{\textsc{\textls[120]{Abstract}}}\end{center}
\begin{list}{}{\leftmargin=0.78in\rightmargin=0.78in}\item\small}
{\end{list}\normalsize}

\newcommand{\paperkeywords}[1]{\noindent\textbf{\textit{Keywords}} #1\par\vspace{0.65em}}

\begin{document}
\paperfrontmatter

\begin{paperabstract}
Three of the most popular methods for training language models to reason look like three different tricks. They are not. All three adjust a single number: \emph{standard deviation}, reflecting how much a prompt's sampled answers disagree. When such a model is trained, it answers each problem many times, and an automatic checker marks every answer right or wrong. The standard deviation of those marks measures the disagreement: largest when the answers split evenly between right and wrong, and zero when they all agree. Group Relative Policy Optimization (GRPO) divides by this number, GRPO Done Right (Dr.\,GRPO) drops the division, and Decoupled Clip and Dynamic Sampling Policy Optimization (DAPO) discards the groups where it is zero. Each is presented as its own fix, yet this paper proves they are three settings of one dial. That dial is not cosmetic: for right-or-wrong rewards, the disagreement is exactly the size of the training update, the \emph{group-standard-deviation identity}. A split group teaches the most, while a unanimous group teaches nothing and falls silent. The same result says which problems deserve the most weight and how many tries each one needs. This paper confirms the intuition on a large real difficulty dataset (Big-Math) and in a controlled training run. What looks like a harmless normalization step is the dial that decides where learning happens and how strongly.
\end{paperabstract}

\paperkeywords{GRPO $\cdot$ reward normalization $\cdot$ silent groups $\cdot$ difficulty bias $\cdot$ group size $\cdot$ dynamic sampling $\cdot$ RLVR $\cdot$ LLM reasoning}

\section{Introduction}

Teaching a language model to reason starts with a step that looks wasteful: the same prompt is answered many times over, on purpose. This happens during reinforcement learning, not when a user chats with the model. The model produces a group of candidate answers, an external verifier marks each one as correct or incorrect, and the optimizer updates the model so that rewarded answer paths become more likely. The repeated answers are not redundant outputs for a user; they are measurements of the model's current uncertainty on that prompt.

A simple example captures the mechanism. Suppose a model attempts the same math problem eight times. If all eight attempts are wrong, there is no successful attempt to imitate. If all eight are right, there is no failed attempt to move away from. The useful training case is mixed: some attempts are right and some are wrong. Only then can the training rule compare the two sides. In this sense, a prompt teaches through its \emph{within-group disagreement}.

Group Relative Policy Optimization \citep[GRPO;][]{shao2024deepseekmath,deepseekr1}, the workhorse of current verifiable reasoning training, is built around this comparison. GRPO operates in the \emph{reinforcement learning with verifiable rewards} setting (RLVR), where an automatic checker returns a reward, usually $1$ for a correct final answer and $0$ for an incorrect one. The model supplies the candidate answers; the verifier supplies the rewards; GRPO supplies the rule that converts those rewards into \emph{advantages}; the optimizer changes the model parameters.

\begin{figure}[H]
\centering
\resizebox{0.96\linewidth}{!}{%
\begin{tikzpicture}[
  font=\small,
  box/.style={draw=cnavy, fill=cpanel, line width=0.9pt, text=cink, align=center,
              inner xsep=11pt, inner ysep=9pt, minimum width=52mm, minimum height=16mm,
              rounded corners=3pt},
  hub/.style={draw=cnavy, fill=cnavy, text=white, line width=1.0pt, align=center,
              inner xsep=11pt, inner ysep=9pt, minimum width=52mm, minimum height=16mm,
              rounded corners=3pt},
  lbl/.style={text=cnavy, font=\footnotesize, align=center, inner sep=3.5pt},
  flow/.style={-{Stealth[length=3.2mm,width=2.6mm]}, draw=carrow, line width=1.2pt,
               shorten >=3.5pt, shorten <=3.5pt}
]
\node[box] (policy)   at (0,3.5)   {Policy $\pi_\theta$ samples\\$G$ answers to prompt $x$};
\node[box] (verifier) at (9.8,3.5) {Verifier marks each\\answer right or wrong, $R_i\in\{0,1\}$};
\node[hub] (sigma)    at (9.8,0)   {Group disagreement\\$\sigma=\sqrt{k(G-k)}/G$};
\node[box] (update)   at (0,0)     {Training rule updates $\theta$\\GRPO\;/\;Dr.\,GRPO\;/\;DAPO};
\draw[flow] (policy)   -- node[lbl,above]{$G$ answers $y_1,\dots,y_G$} (verifier);
\draw[flow] (verifier) -- node[lbl,right]{rewards $R_1,\dots,R_G$}      (sigma);
\draw[flow] (sigma)    -- node[lbl,below]{advantages $\Adv_i$}          (update);
\draw[flow] (update)   -- node[lbl,left,align=center]{updated model,\\next step} (policy);
\end{tikzpicture}}
\caption{The training-time loop studied in this paper. The trainer samples one prompt many times to compare its correct and incorrect attempts, and the group reward standard deviation $\sigma$ measures whether they disagree.}
\label{fig:trainingloop}
\end{figure}

The mean subtraction inside GRPO needs little controversy: subtracting any action-independent baseline preserves the policy gradient while reducing variance \citep{williams1992reinforce}. The contested step is the next one, division by the group standard deviation. It is often dismissed as a normalization detail, yet \citet{liu2025drgrpo} identify it as the source of a \emph{question-level difficulty bias} and remove it. Its large-group effect is already understood: for binary rewards it makes GRPO ascend not the raw success rate $p$, but the \emph{arcsine transform} $\E_x[2\arcsin\sqrt{p_x}]$, the classical variance-stabilizing transform of a binomial proportion \citep{bartlett1936,anscombe1948}. \citet{thrampoulidis2025advantageshaping} established this surrogate-reward view.

\begin{takeaway}
\textbf{The lens.} The central object is the group reward standard deviation $\sigma$. It is the amount of disagreement in the verifier's marks inside one prompt's sampled group. GRPO divides by $\sigma$, Dr.\,GRPO drops that division, and DAPO's dynamic sampling discards groups with $\sigma=0$. The paper's claim is that these are not separate tricks but three operations on the same number.
\end{takeaway}

\paragraph{This paper.} Real training does not take the large-group limit; it chooses a finite group size $G$ and updates from the sampled group it receives. The paper's main claim is the exact finite-group accounting behind that update. For binary rewards, if $k$ of $G$ sampled answers are correct, then a single GRPO step on that prompt is, exactly and in any dimension,
\begin{equation}
\label{eq:headline}
g\;=\;\frac1G\sum_i \Adv_i\,\nabla_\theta\log\pi_\theta(y_i)\;=\;\sigma\,\big(\bar s_+-\bar s_-\big),
\qquad \sigma=\frac{\sqrt{k(G-k)}}{G},\quad 0<k<G,
\end{equation}
\emph{independently of the baseline}, where $\bar s_+$ and $\bar s_-$ are the mean scores of the correct and incorrect rollouts. The update has two plain pieces. The direction $\bar s_+-\bar s_-$ says what to favor: correct rollouts over incorrect rollouts. The multiplier $\sigma$ says how strongly to favor it: it vanishes for a unanimous group and peaks for an evenly split one. Equation~\eqref{eq:headline} is the \emph{group-standard-deviation identity}: what sits in the advantage's denominator is the length of the gradient itself. The scalar case $g(k)=\sqrt{k(G-k)}/G$ (where $\bar s_+-\bar s_-=1$) is the form used in the remaining analysis. Averaging \eqref{eq:headline} over groups recovers the arcsine gradient of \citet{thrampoulidis2025advantageshaping} as $G\to\infty$; the identity is the exact finite-$G$ object underneath that limit. Figure~\ref{fig:trainingloop} locates the quantity inside the training loop, and Figure~\ref{fig:schematic} gives the resulting method map: GRPO, Dr.\,GRPO, and DAPO act on one scalar quantity in three different ways.

\begin{figure}[H]
\centering
\resizebox{0.98\linewidth}{!}{%
\begin{tikzpicture}[
  font=\small,
  hub/.style={draw=cnavy, fill=cnavy, text=white, line width=0.8pt, align=center,
              inner xsep=13pt, inner ysep=10pt, rounded corners=2.5pt},
  box/.style={draw=cnavy, fill=cpanel, line width=0.8pt, text=cink, align=center,
              inner xsep=9pt, inner ysep=7pt, minimum width=33mm, rounded corners=2.5pt},
  eff/.style={text=cink, align=center, font=\footnotesize},
  note/.style={text=cnavy, align=center, font=\footnotesize\itshape},
  flow/.style={-{Stealth[length=3mm, width=2.4mm]}, line width=1.2pt, draw=carrow,
               shorten >=3pt, shorten <=3pt}
]
\node[hub] (sig) at (0,0) {$\displaystyle \sigma=\frac{\sqrt{k(G-k)}}{G}$};
\node[note, anchor=south] at (0,1.3) {per-prompt update $g=\sigma\,(\bar s_+-\bar s_-)$: the group's reward std times a right-minus-wrong contrast};
\node[box] (grpo)   at (-6.8,-3.6) {\textbf{GRPO}\\[1pt]divide advantages by $\sigma$};
\node[box] (drgrpo) at ( 0.0,-3.6) {\textbf{Dr.\,GRPO}\\[1pt]remove the division by $\sigma$};
\node[box] (dapo)   at ( 6.8,-3.6) {\textbf{DAPO}\\[1pt]discard groups with $\sigma=0$};
\node[eff, below=6pt of grpo]   {ascends $2\arcsin\sqrt p$;\\[1pt]difficulty bias $1/\sqrt{p(1-p)}$};
\node[eff, below=6pt of drgrpo] {ascends the raw rate $p$;\\[1pt]flat difficulty weight};
\node[eff, below=6pt of dapo]   {drops the silent mass\\[1pt]$p^{G}+(1-p)^{G}$};
\draw[flow] (sig) -- (grpo);
\draw[flow] (sig) -- (drgrpo);
\draw[flow] (sig) -- (dapo);
\end{tikzpicture}}
\caption{One object, three interventions. The group reward standard deviation $\sigma=\sqrt{k(G-k)}/G$ that GRPO divides by, Dr.\,GRPO drops, and DAPO filters to zero.}
\label{fig:schematic}
\end{figure}

\paragraph{Why the exact form matters.} A practitioner does not choose an asymptotic limit; a practitioner chooses $G$ and decides which prompts to keep. The finite-group identity turns both choices into closed forms. The resulting contributions are:
\begin{enumerate}
\item \textbf{The group-standard-deviation identity} (\S\ref{sec:finiteG}, Theorem~\ref{thm:identity}). The per-prompt GRPO update is the contrastive score direction scaled by the group's reward standard deviation $\sqrt{k(G-k)}/G$, exactly, baseline-free, and in any dimension. Its expectation over $k\sim\mathrm{Binomial}(G,p)$ is an exact binomial sum; the large-group arcsine gradient $\sqrt{p(1-p)}$ and its finite-$G$ attenuation $1-1/(8Gp(1-p))$ are its limit and first correction.
\item \textbf{A closed-form group-size law} (\S\ref{sec:groupsize}, Corollary~\ref{cor:designrule}). A group of size $G$ realizes a fraction $\varphi\approx1-1/(8Gp(1-p))$ of the large-group gradient, so reaching fidelity $1-\varepsilon$ needs $G\gtrsim 1/(8\varepsilon p(1-p))$, the square of the difficulty weight: a coin-flip prompt is faithful by $G\approx10$, a prompt at $5\%$ success needs $G\approx70$ (Table~\ref{tab:groupsize}). The budget is read off difficulty, not swept.
\item \textbf{The silent-group rate} (\S\ref{sec:degenerate}). A group is silent (zero advantage everywhere) with probability $p^{G}+(1-p)^{G}$; this is exactly the $\sigma=0$ mass that DAPO's dynamic sampling \citep{yu2025dapo} over-samples and discards, and its logged all-correct fraction is the same functional family $\E_p[p^{n}]$, whose shape and sub-one plateau the closed form reproduces.
\item \textbf{The difficulty bias, exactly} (\S\ref{sec:bias}). The rate at which GRPO converts success probability into objective is $\partial_p\,2\arcsin\sqrt p=1/\sqrt{p(1-p)}$, the very $1/\sigma$ the advantage divides by; this is the question-level difficulty bias of \citet{liu2025drgrpo}, and deleting $\sigma$ reverts the objective from $\arcsin\sqrt p$ to $p$. Group-mean centering is the leave-one-out (RLOO) advantage up to the constant $G/(G-1)$, so the division is the only objective-changing step.
\item \textbf{Validation on real difficulty distributions} (\S\ref{sec:realdata}). On Big-Math \citep{albalak2025bigmath}, $N=215{,}608$ problems with empirical solve rates, the standardization moves $13.9\%\!\to\!24.7\%$ of the implicit objective's gradient mass onto extreme prompts; the silent-group rate is $44\%$ at the common group size $G=8$ and matches direct subsampling of the logged rollouts to within two points.
\item \textbf{The predictions in a controlled run} (\S\ref{sec:experiment}). A real GRPO loop over $6{,}000$ Bernoulli-logit prompts confirms the closed forms as training dynamics: the silent-group rate tracks the measured wasted-group fraction step by step ($R^2=0.999$), the realized gradient mass matches the finite-$G$ reweighting, and the difficulty bias is visible as GRPO lifting the hardest prompts where Dr.\,GRPO stalls.
\end{enumerate}
These results do not require a new algorithm or a large training run. They follow from exact accounting of one sampled group, with the data-dependent quantities read off published rollout statistics.

\section{Three Methods, One Operation Apart}
\label{sec:setup}

This section names the objects in the training loop and spells out the three method formulas. The setting is a single prompt $x$ during training. The policy $\pi_\theta$ samples responses, the verifier assigns rewards, and the training rule converts the reward pattern into a parameter update. All comparisons below differ only in how they handle the group reward standard deviation $\sigma$.

\paragraph{Acronyms and scope.} The three names are Group Relative Policy Optimization (GRPO), GRPO Done Right (Dr.\,GRPO), and Decoupled Clip and Dynamic Sampling Policy Optimization (DAPO). Dr.\,GRPO also discusses length normalization, and DAPO contains additional engineering choices. The present analysis isolates the common axis relevant to all three: the standard-deviation operation on a group of verifier rewards.

\paragraph{GRPO advantages.} For a prompt $x$, GRPO draws a group of $G$ responses $y_1,\dots,y_G\sim\pi_\theta(\cdot\mid x)$ with scalar rewards $R_1,\dots,R_G$, and forms the standardized advantage
\begin{equation}
\label{eq:adv}
\Adv_i \;=\; \frac{R_i-\mu}{\sigma},\qquad
\mu=\frac1G\sum_{j}R_j,\quad
\sigma=\sqrt{\frac1G\sum_j (R_j-\mu)^2},
\end{equation}
which is broadcast to every token of $y_i$ and plugged into the usual clipped surrogate with a KL penalty to a reference policy. The analysis here concerns the advantage construction; the clipping and KL terms are standard regularizers that do not affect the advantage's expectation at the first, on-policy step (ratio $=1$), and are held aside throughout.

\paragraph{Two updates, one difference.} Dropping the division by $\sigma$ from \eqref{eq:adv} leaves the mean-centered advantage $\Adv_i=R_i-\mu$. This second update is neither new nor specific to one method: it is the update used by Dr.\,GRPO \citep{liu2025drgrpo}, and it equals the leave-one-out (RLOO) advantage \citep{ahmadian2024rloo,kool2019buy} and baselined REINFORCE \citep{williams1992reinforce} up to a learning-rate constant (Proposition~\ref{prop:loo}). The equivalence is straightforward: each method subtracts a baseline from the reward and stops there. GRPO alone adds one further operation, division by $\sigma$. Thus the objective-changing distinction is not the baseline; it is the standard-deviation operation. Table~\ref{tab:threeops} makes this explicit after the binary-reward notation is introduced.

\paragraph{Binary rewards.} In RLVR the reward is the verifier's verdict, $R_i\in\{0,1\}$. The verifier is the checker: for math it may compare final answers or symbolic forms, for code it may run tests, and for structured tasks it may apply a parser or schema. GRPO does not decide truth by itself; it receives the reward vector $R_1,\dots,R_G$ from this checker. Write $p=p_x(\theta)=\Prob_{y\sim\pi_\theta(\cdot|x)}[\,R=1\,]$ for the policy's success probability on $x$, and let $k=\sum_i R_i$ be the number of correct samples in a group. Then $\mu=k/G$ and, because rewards are Bernoulli, $\sigma=\sqrt{\mu(1-\mu)}=\sqrt{k(G-k)}/G$ \emph{exactly}. When $k=0$ all sampled answers are wrong; when $k=G$ all sampled answers are right. In both cases $\sigma=0$, so there is no right-versus-wrong contrast inside the group.

\begin{table}[H]
\centering
\small
\setlength{\tabcolsep}{7pt}
\renewcommand{\arraystretch}{1.4}
\begin{tabular}{@{}p{0.12\linewidth}p{0.21\linewidth}p{0.27\linewidth}p{0.27\linewidth}@{}}
\toprule
\textbf{Method} & \textbf{Operation on $\sigma$} & \textbf{Per-prompt update $g$} & \textbf{In words} \\
\midrule
GRPO & Divide by $\sigma$ & $g=\sigma\,\Delta s$ & The update size is the group's disagreement. \\
\addlinespace[4pt]
Dr.\,GRPO & Remove the division & $g=\sigma^{2}\,\Delta s$ & Ascends raw accuracy, with a flat difficulty weight. \\
\addlinespace[4pt]
DAPO & Drop the $\sigma=0$ groups & $g=\mathbf{1}\{0<k<G\}\,\sigma\,\Delta s$ & Discards groups with no right-versus-wrong contrast. \\
\bottomrule
\end{tabular}
\caption{The three methods as operations on one number. Here $\Delta s=\bar s_+-\bar s_-$ and $\sigma=\sqrt{k(G-k)}/G$; the advantage is $\Adv_i=(R_i-\mu)/\sigma$ for GRPO and $\Adv_i=R_i-\mu$ for Dr.\,GRPO. GRPO scales the update by $\sigma$, Dr.\,GRPO removes that scaling, and DAPO changes which groups contribute.}
\label{tab:threeops}
\end{table}

\paragraph{The score identity.} The argument uses the elementary policy-gradient fact that for any baseline $b$ independent of $y$,
\begin{equation}
\label{eq:score}
\E_{y\sim\pi_\theta(\cdot|x)}\!\big[(R(y)-b)\,\nabla_\theta\log\pi_\theta(y\mid x)\big]
=\nabla_\theta\,\E_y[R(y)] = \nabla_\theta\,p_x(\theta).
\end{equation}

\paragraph{Mean-centering is leave-one-out, rescaled.} The mean subtraction has a familiar form. Excluding the sample being scored gives the leave-one-out (RLOO) baseline $b_i=\frac{1}{G-1}\sum_{j\ne i}R_j$ \citep{ahmadian2024rloo,kool2019buy}, whose advantage $R_i-b_i$ is unbiased by \eqref{eq:score}, and subtracting the group mean is the same thing up to a constant.
\begin{proposition}[Group-mean centering is rescaled RLOO]
\label{prop:loo}
For any group $R_1,\dots,R_G$ with $G\ge2$ and RLOO baseline $b_i=\frac{1}{G-1}\sum_{j\ne i}R_j$,
\[
R_i-\mu \;=\; \frac{G-1}{G}\,(R_i-b_i)\qquad\text{for every }i.
\]
\end{proposition}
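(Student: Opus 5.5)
The identity is pure algebra on a single group, so the plan is to write both sides in terms of the group total $S=\sum_j R_j=G\mu$ and compare. Nothing probabilistic is needed: no score identity \eqref{eq:score}, and no binary-reward assumption. The statement holds for arbitrary real rewards and every index $i$, provided $G\ge2$ so that $b_i$ is defined.

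First I would express the leave-one-out baseline through the total: $\sum_{j\ne i}R_j=S-R_i$, so $b_i=(S-R_i)/(G-1)$. Then
\[
R_i-b_i=\frac{(G-1)R_i-S+R_i}{G-1}=\frac{GR_i-S}{G-1}.
\]
Next I would rewrite the centered reward in the same form: $R_i-\mu=R_i-S/G=(GR_i-S)/G$. Both quantities are now multiples of the common numerator $GR_i-S$. Comparing them gives $R_i-\mu=\frac{G-1}{G}(R_i-b_i)$, which is the claim.

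There is no real obstacle here. The only point needing care is the domain: the constant $G/(G-1)$ requires $G\ge2$, which the statement already assumes. I would also add a remark, for use in the surrounding discussion, on what this means. Because the factor $(G-1)/G$ does not depend on $i$ or on the rewards, Dr.\,GRPO's update equals the RLOO update times a fixed constant. Since $b_i$ depends only on the other samples, the RLOO update is unbiased by \eqref{eq:score}. The mean-centered update is therefore unbiased for $\frac{G-1}{G}\nabla_\theta p_x(\theta)$. This supports the paper's point that the division by $\sigma$ is the only step that changes the objective.
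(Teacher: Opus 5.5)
Your proof is correct and is essentially the paper's argument. The paper substitutes $\sum_j R_j = R_i + (G-1)b_i$ into $\mu$, while you put both sides over the common numerator $GR_i - S$; this is the same one-line rearrangement, and your remark that the mean-centered update is unbiased for $\frac{G-1}{G}\nabla_\theta p_x(\theta)$ is also right.
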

\begin{proof}
$\sum_j R_j=R_i+(G-1)b_i$, so $\mu=\tfrac1G\big(R_i+(G-1)b_i\big)$ and $R_i-\mu=\tfrac{G-1}{G}(R_i-b_i)$.
\end{proof}
Mean-centering is therefore the unbiased RLOO advantage up to the constant $G/(G-1)$, which a learning rate absorbs; both updates share it, so it is not where they differ. The division by $\sigma$, the step that does change the objective, is analyzed next.

\section{The Update Is the Group's Standard Deviation}
\label{sec:finiteG}

The main result is the mathematical version of the training story above. Take one prompt, sample $G$ answers, and let the verifier split them into correct and incorrect sets. GRPO moves the model toward the correct side and away from the incorrect side. The theorem below states that the length of this move is exactly the group reward standard deviation. Thus the same scalar that appears in the denominator of the advantage is also the scalar that measures how much learning signal the prompt actually produced. Everything downstream (the group-size budget, the silent fraction, and the difficulty bias) is read from this one form. The statement holds for any policy in any dimension, not only for a scalar model.

\begin{theorem}[The group-standard-deviation identity]
\label{thm:identity}
Fix a prompt and a group of $G$ responses with binary rewards, $k$ of them correct, $0<k<G$. Let $s_i=\nabla_\theta\log\pi_\theta(y_i\mid x)$ be the score of response $i$, and let $\bar s_+=\frac1k\sum_{i:R_i=1}s_i$ and $\bar s_-=\frac1{G-k}\sum_{i:R_i=0}s_i$ be the mean scores of its correct and incorrect responses. The per-prompt GRPO update is, exactly,
\begin{equation}
\label{eq:exact}
g\;=\;\frac1G\sum_i \Adv_i\,s_i\;=\;\sigma\,\big(\bar s_+-\bar s_-\big),
\qquad \sigma=\frac{\sqrt{k(G-k)}}{G},
\end{equation}
\emph{independently of any baseline}, and $g=0$ when $k\in\{0,G\}$. The scalar coefficient is the group's empirical reward standard deviation; the direction $\bar s_+-\bar s_-$ contrasts the correct and incorrect responses. For a one-dimensional Bernoulli-logit prompt $p=\varsigma(\theta)$, where $s_i=y_i-p$ and $\bar s_+-\bar s_-=1$, this is the scalar form
\begin{equation}
\label{eq:scalar}
g(k)\;=\;\frac{\sqrt{k(G-k)}}{G}\;=\;\sigma .
\end{equation}
\end{theorem}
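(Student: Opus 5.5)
The plan is a direct computation that splits the sum over the group by verdict. With $R_i\in\{0,1\}$ and $k$ correct responses, the group mean is $\mu=k/G$. As noted in the binary-rewards paragraph, $\sigma=\sqrt{\mu(1-\mu)}=\sqrt{k(G-k)}/G$, which is strictly positive when $0<k<G$. The advantage therefore takes only two values:
\[
\Adv_i=\frac{1-k/G}{\sigma}=\frac{G-k}{G\sigma}\quad (R_i=1),
\qquad
\Adv_i=\frac{-k/G}{\sigma}=-\frac{k}{G\sigma}\quad (R_i=0).
\]

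Splitting $\frac1G\sum_i \Adv_i s_i$ into the correct and incorrect sets, and writing $\sum_{R_i=1}s_i=k\bar s_+$ and $\sum_{R_i=0}s_i=(G-k)\bar s_-$, gives
\[
g=\frac{1}{G}\Big(\frac{G-k}{G\sigma}\,k\bar s_+-\frac{k}{G\sigma}\,(G-k)\bar s_-\Big)=\frac{k(G-k)}{G^2\sigma}\,(\bar s_+-\bar s_-).
\]
Since $k(G-k)/G^2=\sigma^2$, the coefficient is $\sigma^2/\sigma=\sigma$, which is \eqref{eq:exact}. The same computation also gives the Dr.\,GRPO row of Table~\ref{tab:threeops}: without the division the coefficient is $\sigma^2$.

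For the phrase \emph{independently of any baseline}, I would replace $\mu$ by an arbitrary scalar $b$ that is constant across the group. Then $\frac1G\sum_i (R_i-b)s_i/\sigma$ differs from the computation above by $\frac{(\mu-b)}{\sigma}\cdot\frac1G\sum_i s_i$. This extra term is not identically zero for a sampled group. So I would read the claim as saying that the coefficient--direction form depends only on the within-group contrast: the split into correct and incorrect sets isolates $\sum_i(R_i-\mu)s_i$, in which any common shift cancels. I would also note that by Proposition~\ref{prop:loo} the RLOO baseline yields the same expression up to the factor $G/(G-1)$. Pinning down the precise sense of "baseline-free" is the main obstacle. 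I would state it as invariance of the $\mu$-centered form and flag that a non-mean baseline adds a term proportional to the mean score, which is zero only in expectation by \eqref{eq:score}.

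The degenerate cases $k\in\{0,G\}$ are handled separately. There every $R_i=\mu$, so every numerator $R_i-\mu$ vanishes, and under the convention that $\Adv_i=0$ when $\sigma=0$ we get $g=0$.

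For the scalar case, take $\pi_\theta(y)=p^y(1-p)^{1-y}$ with $p=\varsigma(\theta)$. Then $\partial_\theta\log\pi_\theta(y)=y-p$, so $\bar s_+=1-p$, $\bar s_-=-p$, and $\bar s_+-\bar s_-=1$. Substituting into \eqref{eq:exact} gives \eqref{eq:scalar}.
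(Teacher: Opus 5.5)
Your proof is correct and matches the paper's own proof: the same two-valued standardized advantage, the same split of $\frac1G\sum_i\Adv_i s_i$ by verdict with $k(G-k)/(G^2\sigma)=\sigma$, and the same scalar reduction via $s_i=y_i-p$. Your caution about ``independently of any baseline'' is justified. The paper's one-line remark (the shift ``cancels because $\sum_i\Adv_i=0$'') supports only the shift-invariance reading you adopt, and replacing $\mu$ by a fixed $b$ does add $\frac{\mu-b}{\sigma}\cdot\frac1G\sum_i s_i$ to a sampled group's update. One correction to your side remark: because the factor $(\mu-b)/\sigma$ is itself random, that extra term need not vanish even in expectation; only the bare mean score does.
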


\begin{proof}
With binary rewards the standardized advantage takes two values, $\Adv_+=(1-\mu)/\sigma=(G-k)/(G\sigma)$ on the $k$ correct responses and $\Adv_-=-\mu/\sigma=-k/(G\sigma)$ on the $G-k$ incorrect ones, with $\mu=k/G$. Since $\sum_{i:R_i=1}s_i=k\,\bar s_+$ and $\sum_{i:R_i=0}s_i=(G-k)\,\bar s_-$,
\[
g=\frac1G\big(\Adv_+\,k\,\bar s_+ + \Adv_-\,(G-k)\,\bar s_-\big)
=\frac{k(G-k)}{G^2\sigma}\,(\bar s_+-\bar s_-)
=\sigma\,(\bar s_+-\bar s_-),
\]
the last step using $k(G-k)/(G^2\sigma)=\sigma$ from $\sigma=\sqrt{k(G-k)}/G$. Adding any baseline $b$ to the rewards shifts every $\Adv_i$ equally and cancels because $\sum_i\Adv_i=0$. The scalar form follows from $\bar s_+-\bar s_-=(1-p)-(-p)=1$.
\end{proof}

\begin{takeaway}
\textbf{The group-standard-deviation identity.} The update factors cleanly. Its direction $\bar s_+-\bar s_-$ points from the incorrect rollouts toward the correct ones; its length is the group's reward standard deviation $\sigma=\sqrt{k(G-k)}/G$, zero for a unanimous group and largest for an even split. This is the paper's central lens: for binary rewards the standard deviation is not a denominator used for normalization but the prompt's learning signal.
\end{takeaway}

Averaging the identity over random groups gives an \emph{exact} expected gradient, whose limit is the established large-group arcsine reading.

\begin{proposition}[Exact expected gradient]
\label{prop:expectation}
For the scalar prompt \eqref{eq:scalar} with $k\sim\mathrm{Binomial}(G,p)$, the expected per-prompt gradient is the exact finite sum
\begin{equation}
\label{eq:exactE}
\E[g]=\frac1G\sum_{k=1}^{G-1}\binom{G}{k}p^k(1-p)^{G-k}\sqrt{k(G-k)} .
\end{equation}
\end{proposition}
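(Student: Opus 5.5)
The plan is to treat the identity as a deterministic function of the count $k$ and then average it over the sampling distribution of $k$. First, I would fix the interpretation of \eqref{eq:scalar}. For the one-dimensional Bernoulli-logit prompt, Theorem~\ref{thm:identity} gives the realized update in closed form:
\begin{itemize}
\item if $0<k<G$, the update is exactly $g(k)=\sqrt{k(G-k)}/G$;
\item if $k\in\{0,G\}$, the advantages vanish, so $g=0$.
\end{itemize}
The theorem's statement covers the degenerate case explicitly; GRPO implementations set $\Adv_i=0$ when $\sigma=0$.

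Both branches agree with the single formula $\sqrt{k(G-k)}/G$, because that expression vanishes at $k=0$ and $k=G$. So $g$ is one function of $k$ alone, $g=h(k)$ with $h(k)=\sqrt{k(G-k)}/G$ on all of $\{0,\dots,G\}$. This is the step I would be careful about: the $\sigma=0$ convention must be settled so that $h$ is a genuine function of $k$ and the expectation is well defined. Theorem~\ref{thm:identity} already settles it.

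Second, I would identify the law of $k$. The $G$ responses are drawn i.i.d.\ from $\pi_\theta(\cdot\mid x)$, and each reward is Bernoulli$(p)$ with $p=\varsigma(\theta)$. Therefore $k=\sum_i R_i\sim\mathrm{Binomial}(G,p)$. Since $g=h(k)$, the law of the unconscious statistician gives
\[
\E[g]=\sum_{k=0}^{G}\binom{G}{k}p^k(1-p)^{G-k}\,\frac{\sqrt{k(G-k)}}{G}.
\]
The endpoint terms $k=0$ and $k=G$ contribute zero. Dropping them and pulling out $1/G$ gives exactly \eqref{eq:exactE}.

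There is no real obstacle beyond this bookkeeping. The only point needing a sentence is that the randomness in $g$ enters solely through $k$. This holds in the scalar model because $\bar s_+-\bar s_-=1$ deterministically whenever $0<k<G$, by Theorem~\ref{thm:identity}. In the general-dimensional case one would instead need $\E[\bar s_+-\bar s_-\mid k]$, which is not claimed here.
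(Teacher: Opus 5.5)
Your proposal is correct and matches the paper's argument. The paper gives no separate proof; it obtains \eqref{eq:exactE} by averaging the scalar identity $g(k)=\sqrt{k(G-k)}/G$ over $k\sim\mathrm{Binomial}(G,p)$, with the unanimous endpoints $k\in\{0,G\}$ contributing zero. Your two points of care — the $\sigma=0$ convention, and the fact that the randomness enters only through $k$ because $\bar s_+-\bar s_-=1$ in the scalar model — simply state what the paper leaves implicit.
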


\begin{corollary}[Large-group attenuation, asymptotic]
\label{cor:attenuation}
As $G\to\infty$,
\begin{equation}
\label{eq:attenuation}
\E[g]=\sqrt{p(1-p)}\left(1-\frac{1}{8\,G\,p(1-p)}+O(G^{-2})\right),
\end{equation}
the arcsine gradient $\sqrt{p(1-p)}$ attenuated by the relative factor $1/(8Gp(1-p))$. This is the interior expansion of the exact sum \eqref{eq:exactE}; it loses accuracy near $p\in\{0,1\}$, where the unanimous mass $p^{G}+(1-p)^{G}$ is not negligible. At $G=8,\,p=0.05$ the exact realized fraction $\E[g]/\sqrt{p(1-p)}$ is $0.54$, against the expansion's $0.67$.
\end{corollary}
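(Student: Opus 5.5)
We take $k\sim\mathrm{Binomial}(G,p)$ and write $\E[g]=\frac1G\E[h(k)]$, where $h(k)=\sqrt{k(G-k)}$; the unanimous values $k\in\{0,G\}$ contribute zero by Theorem~\ref{thm:identity}. To expand, set $k=Gp+\sqrt{G}\,Z$ with $Z=(k-Gp)/\sqrt G$, so that $\E Z=0$, $\E Z^2=p(1-p)=:v$, $\E Z^3=O(G^{-1/2})$ and $\E Z^4=O(1)$. Then
\[
\frac{\sigma}{\sqrt{v}}=\sqrt{\frac{\mu(1-\mu)}{v}},\qquad \mu=p+\delta,\quad \delta=Z/\sqrt G .
\]
We Taylor-expand $f(\mu)=\sqrt{\mu(1-\mu)}$ around $p$:
\[
f(p+\delta)=f(p)+f'(p)\,\delta+\tfrac12 f''(p)\,\delta^2+\tfrac16 f'''(p)\,\delta^3+R_4 .
\]

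Taking expectations, the linear term vanishes. The quadratic term contributes $\tfrac12 f''(p)\,v/G$, while the cubic and quartic terms are $O(G^{-2})$, since $\E\delta^3=O(G^{-2})$ and $\E\delta^4=O(G^{-2})$. The key computation is $f''$. With $q(\mu)=\mu(1-\mu)$ and $f=q^{1/2}$ we have
\[
f''=\frac{q''}{2q^{1/2}}-\frac{(q')^2}{4q^{3/2}}=-\frac{1}{q^{1/2}}-\frac{(1-2p)^2}{4q^{3/2}}=-\frac{4q+(1-2p)^2}{4q^{3/2}}=-\frac{1}{4q^{3/2}},
\]
using $q''=-2$ and $4q+(1-2p)^2=1$. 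Hence
\[
\E f=\sqrt v\left(1+\frac{1}{2}\cdot\frac{v}{G}\cdot\frac{-1}{4v^{3/2}}\cdot\frac{1}{\sqrt v}\right)+O(G^{-2})=\sqrt v\left(1-\frac{1}{8Gv}\right)+O(G^{-2}),
\]
which is the claimed expansion.

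The main obstacle is justifying the remainder, because $f$ is not smooth at $\mu\in\{0,1\}$ and $f'''$ blows up there. We split on the event $|\mu-p|\le p(1-p)/2$. On this event $f$ has bounded derivatives, with constants depending on $p$, so a fourth-order Taylor remainder bound gives $O(\E\delta^4)=O(G^{-2})$. On the complement, $0\le f\le 1/2$ and the Taylor polynomial is bounded, and Hoeffding's inequality gives probability $\le 2e^{-Gv^2/2}$, which is exponentially small. This also explains the caveat in the statement: the constants degrade as $v\to0$, so the $O(G^{-2})$ is non-uniform near the endpoints, where the unanimous mass $p^G+(1-p)^G$ dominates.

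The numerical illustration at $G=8$, $p=0.05$ would be checked directly from the exact sum \eqref{eq:exactE}, computing $\E[g]/\sqrt{p(1-p)}$ term by term for $k=1,\dots,7$. This is compared with $1-1/(8\cdot8\cdot0.0475)\approx0.67$.
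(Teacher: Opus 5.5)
Your proof is correct and follows the paper's route. Like the paper, you use a second-order delta-method expansion of $f(\mu)=\sqrt{\mu(1-\mu)}$ about $\mu=p$, with $f''(p)=-\tfrac14(p(1-p))^{-3/2}$ and $\operatorname{Var}(\mu)=p(1-p)/G$; your fourth-order Taylor bound on $|\mu-p|\le p(1-p)/2$ plus the Hoeffding tail makes the remainder rigorous where the paper only sketches it (noting the unanimous terms are $O(p^{G}+(1-p)^{G})$), and the numerical check goes through, with $\E[g]/\sqrt{p(1-p)}\approx0.539$ against $1-1/3.04\approx0.671$.
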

\begin{proof}
A second-order delta-method expansion of $f(\mu)=\sqrt{\mu(1-\mu)}$ about $\mu=p$, with $f''(p)=-\tfrac14(p(1-p))^{-3/2}$ and $\operatorname{Var}(\mu)=p(1-p)/G$, gives $\E[f(\mu)]\approx\sqrt{p(1-p)}-\tfrac{1}{8G}(p(1-p))^{-1/2}$; the excluded unanimous terms are $O(p^{G}+(1-p)^{G})$.
\end{proof}

The number $\sigma$ is computed from the sampled group alone; it does not require knowledge of the policy's true success probability $p$. It is therefore an observable training diagnostic: before estimating any global difficulty, the sampled rollouts already reveal how much signal the prompt produced. Figure~\ref{fig:identity} (left) confirms the scalar form: the gradient lands on $\sqrt{k(G-k)}/G$, and the Monte-Carlo markers at three baselines coincide because the baseline cancels in \eqref{eq:exact}. Averaging over groups gives the arcsine gradient $\sqrt{p(1-p)}$, attenuated at finite $G$ by Corollary~\ref{cor:attenuation}. The identity itself is the finite-$G$ statement used by an actual training step.

\begin{figure}[H]
\centering
\includegraphics[width=\textwidth]{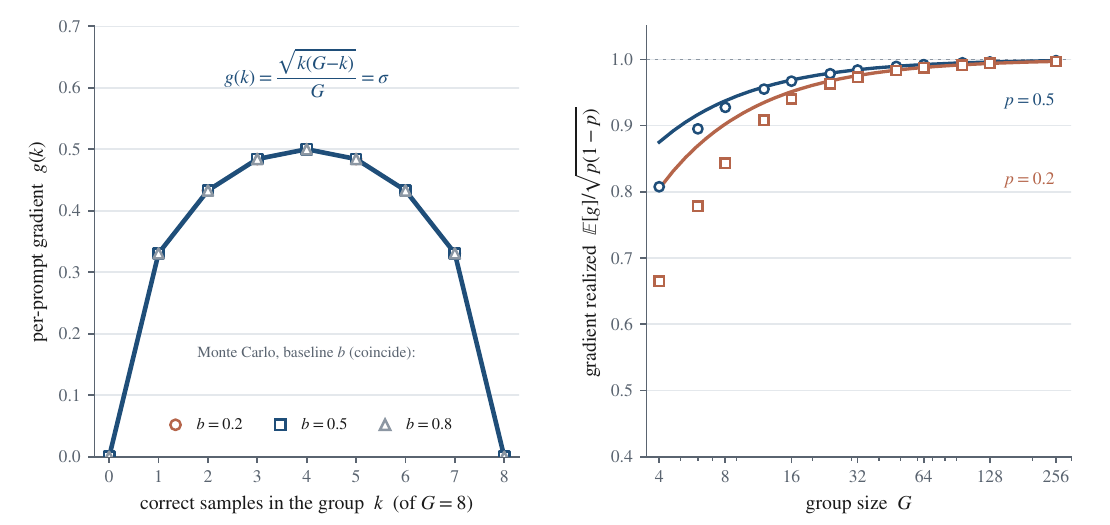}
\caption{The group-standard-deviation identity. \emph{Left:} the per-prompt gradient at $G=8$, zero when all samples agree and largest at an even split, landing on $\sqrt{k(G-k)}/G$; Monte-Carlo markers at three baselines $b\in\{0.2,0.5,0.8\}$. \emph{Right:} the realized fraction $\E[g]/\sqrt{p(1-p)}$ against the asymptotic law $1-1/(8Gp(1-p))$ of Corollary~\ref{cor:attenuation}.}
\label{fig:identity}
\end{figure}

\section{How Many Samples a Prompt Needs}
\label{sec:groupsize}

Group size is usually fixed before training, often by convention ($G=8$ or $16$) rather than by calculation. Corollary~\ref{cor:attenuation} turns this choice into a difficulty-dependent quantity. The question becomes: how many samples are needed before a finite group behaves like the large-group limit? Define the \emph{gradient fidelity}
\begin{equation}
\label{eq:fidelity}
\varphi(G,p)\;=\;\frac{\E[g]}{\sqrt{p(1-p)}}\;\in\;[0,1],
\end{equation}
as the fraction of the large-group (arcsine) gradient realized by a group of size $G$ at difficulty $p$. It approaches $1$ as $G\to\infty$, and the interior expansion \eqref{eq:attenuation} gives $\varphi\approx 1-1/(8Gp(1-p))$. Solving this expression for $G$ gives the group-size law.

\begin{corollary}[The group-size law]
\label{cor:designrule}
A group of size $G$ realizes interior fidelity $\varphi\ge 1-\varepsilon$ at difficulty $p$ once
\begin{equation}
\label{eq:designrule}
G\;\ge\;G^\star(\varepsilon,p)\;=\;\frac{1}{8\,\varepsilon\,p(1-p)}\;=\;\frac{w(p)^2}{8\,\varepsilon},
\qquad w(p)=\frac{1}{\sqrt{p(1-p)}} .
\end{equation}
The budget is the mid-difficulty cost $1/(2\varepsilon)$ times a difficulty penalty $1/[4p(1-p)]\ge1$: the requirement grows as the \emph{square} of the difficulty weight $w(p)$ that returns as GRPO's reweighting in \S\ref{sec:bias}. The penalty is $1$ at $p=\tfrac12$, $2.8$ at $p=0.1$, and $5.3$ at $p=0.05$. Near $p\in\{0,1\}$ the unanimous mass $p^{G}+(1-p)^{G}$ (\S\ref{sec:degenerate}) costs more than the interior expansion accounts for, so $G^\star$ understates the requirement there; the exact group size sits beside $G^\star$ in Table~\ref{tab:groupsize}.
\end{corollary}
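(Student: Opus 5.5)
The corollary follows from the interior expansion of Corollary~\ref{cor:attenuation}, so the plan is to take that expansion as the working definition of interior fidelity and invert it. By \eqref{eq:fidelity} and \eqref{eq:attenuation}, dropping the $O(G^{-2})$ term gives the interior fidelity
\[
\varphi_{\mathrm{int}}(G,p)=1-\frac{1}{8Gp(1-p)} .
\]
For fixed $p\in(0,1)$ this is strictly increasing in $G$. The condition $\varphi_{\mathrm{int}}\ge 1-\varepsilon$ is therefore equivalent to $1/(8Gp(1-p))\le\varepsilon$. Since $G,\varepsilon,p(1-p)>0$, multiplying through gives $G\ge 1/(8\varepsilon p(1-p))=G^\star(\varepsilon,p)$. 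Substituting $w(p)^2=1/(p(1-p))$ yields the second form $w(p)^2/(8\varepsilon)$.

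Next comes the factorization claim. Write
\[
\frac{1}{8\varepsilon p(1-p)}=\frac{1}{2\varepsilon}\cdot\frac{1}{4p(1-p)} .
\]
The penalty is at least $1$ because $p(1-p)\le\tfrac14$ by AM--GM, with equality only at $p=\tfrac12$. The quoted values come from direct evaluation:
\begin{itemize}
\item at $p=0.1$, $p(1-p)=0.09$, so the penalty is $1/0.36\approx2.8$;
\item at $p=0.05$, $p(1-p)=0.0475$, so the penalty is $1/0.19\approx5.3$.
\end{itemize}
The phrase ``square of the difficulty weight'' is just the identity $G^\star\propto w(p)^2$.

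The only real content, and the main obstacle, is the final caveat that $G^\star$ understates the requirement near the boundary. The cleanest route is to note that the exact sum \eqref{eq:exactE} omits the $k=0$ and $k=G$ terms. The delta-method proof of Corollary~\ref{cor:attenuation}, by contrast, implicitly treats $f(\mu)=\sqrt{\mu(1-\mu)}$ as smooth across all $\mu$. Near $p\in\{0,1\}$ a non-negligible share of the binomial mass sits at $\mu\in\{0,1\}$, where $f$ has infinite derivative. There $f$ lies well below its second-order Taylor approximation around $p$, so the true $\E[g]$ is smaller than the expansion predicts. I would support this with the worked value already in Corollary~\ref{cor:attenuation}: at $G=8$, $p=0.05$ the exact fidelity is $0.54$ against $0.67$ from the expansion. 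Since the exact $\varphi$ falls short of the interior one at the same $G$, the exact threshold exceeds $G^\star$.

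A fully rigorous monotone comparison for all $p$ would need a concavity or Jensen-type inequality bounding the exact sum above by the expansion. I would not attempt that. Instead I would state the claim as a qualitative caveat backed by the numerical exact values in Table~\ref{tab:groupsize}, which is consistent with how the corollary phrases it.
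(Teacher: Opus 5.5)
Your proposal is correct and follows the paper's route. The paper obtains the law simply by solving the interior expansion $\varphi\approx1-1/(8Gp(1-p))$ of Corollary~\ref{cor:attenuation} for $G$, and it supports the boundary caveat only numerically (the $0.54$ versus $0.67$ example and the exact column of Table~\ref{tab:groupsize}), just as you propose. One small sharpening: the $k\in\{0,G\}$ terms contribute zero to the exact sum, so the shortfall is really $\E[f(\mu)-T(\mu)]$, with $T$ the quadratic Taylor polynomial of $f$ about $p$. Because $f-T$ changes sign on $[0,1]$ whenever $p\neq\tfrac12$, your endpoint argument is heuristic, and relying on the tabulated exact values is the right call.
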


\begin{table}[H]
\centering
\caption{The group-size law, exact against closed form. Each entry is the exact group size at which the gradient fidelity $\varphi$ of \eqref{eq:fidelity} reaches the column target, with the closed-form budget $G^\star=1/(8\varepsilon p(1-p))$ of \eqref{eq:designrule} in parentheses.}
\label{tab:groupsize}
\small
\input{tables/generated_groupsize_table.tex}
\end{table}

The law can be read from a table rather than swept. Three cases summarize the effect. A coin-flip prompt is sample-efficient: $G=11$ already realizes $95\%$ of the large-group gradient, and the conventional $G=8$ realizes $93\%$. A very hard prompt is more expensive: a prompt solved $5\%$ of the time needs $G=69$ for the same $95\%$ fidelity, roughly six times the coin-flip budget, and at $G=8$ realizes only about half of its large-group gradient ($\varphi=0.54$). This is the same lens again: if the group rarely contains both right and wrong answers, more samples are needed before the standard deviation reveals a stable learning signal. The closed form is most accurate in the high-fidelity regime that forces large groups: at $99\%$ fidelity and $p=0.05$, $G^\star$ gives $263$ against the exact requirement $273$. The discrepancy is concentrated at small groups near the extremes, where unanimous groups remain common. Figure~\ref{fig:groupsize} shows both views: fidelity rising to one with $G$ (left), and the budget's difficulty bathtub (right). The practical reading is simple: one uniform group size spends too much on mid-difficulty prompts and too little on the hard tails, precisely where the silent-group rate of \S\ref{sec:degenerate} is largest. The same budgeting question returns one stage later at inference, governed by a different mechanism: when a fixed prompt is answered by drawing many responses and returning one, the limiting factor is not within-group difficulty but how strongly the responses agree, so the independent-draw accounting here gives way to a correlation ceiling that caps selection from extra samples even while coverage keeps climbing \citep{bay2026ceilings}.

\begin{figure}[H]
\centering
\includegraphics[width=\textwidth]{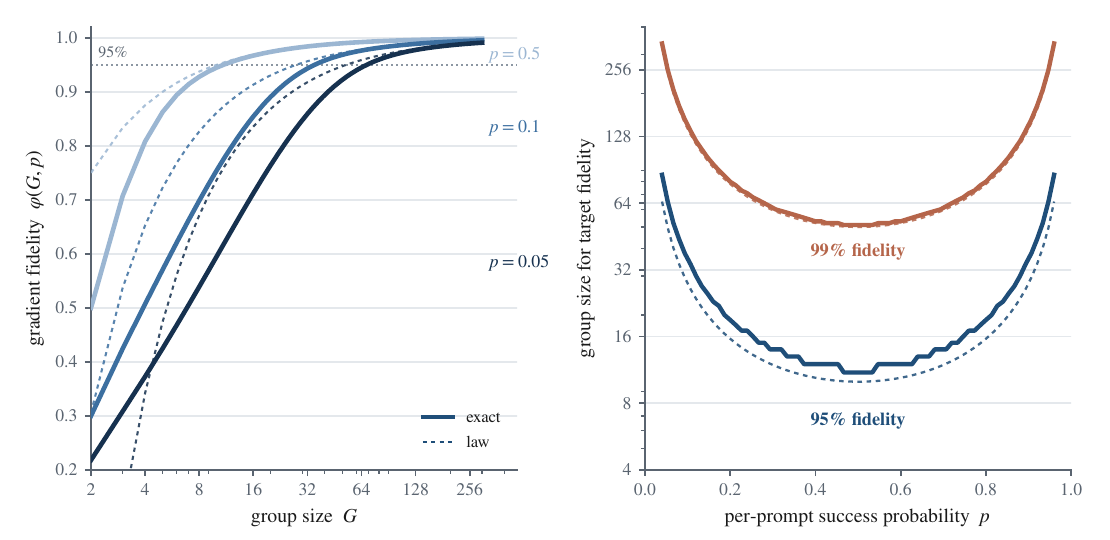}
\caption{The group-size law. \emph{Left:} the gradient fidelity $\varphi(G,p)=\E[g]/\sqrt{p(1-p)}$ against group size for four difficulties, with the closed form $1-1/(8Gp(1-p))$ dashed. \emph{Right:} the group size required for $95\%$ and $99\%$ fidelity against difficulty, exact (solid) and the law $G^\star$ (dashed).}
\label{fig:groupsize}
\end{figure}

\section{Silent Groups, and What DAPO Discards}
\label{sec:degenerate}

The identity \eqref{eq:exact} sets $g=0$ exactly when a group is unanimous. A \emph{silent group} does not mean that the prompt is unimportant; it means that this particular sampled group produced no contrast between right and wrong responses. In the student analogy, all attempts are either failures or successes, so there is no within-prompt comparison to learn from. Thus the second practical decision, which groups to keep, also has a closed form. If every sampled answer is correct or every sampled answer is wrong, the reward standard deviation is zero, the standardized advantage is undefined, and the prompt contributes no gradient. The probability of this event, the \emph{silent-group rate}, is
\begin{equation}
\label{eq:degenerate}
\Prob[\text{group silent}] \;=\; p^{G}+(1-p)^{G},
\end{equation}
which is large for easy or hard prompts. At any interior $p$ it decays geometrically in $G$, but at $p\in\{0,1\}$ it is pinned at $1$: no group size can produce signal when the policy is always wrong or always right (Figure~\ref{fig:finiteG}). This is exactly the failure mode targeted by DAPO's \emph{dynamic sampling} \citep{yu2025dapo}: DAPO over-samples and discards prompts whose group accuracy is $0$ or $1$, keeping only groups with $0<k<G$. In this notation, dynamic sampling is simply a keep rule on the same scalar: retain the group when $\sigma>0$ and replace it when $\sigma=0$. Discarding is not the only response to a silent group: a fixed-reference sign advantage instead assigns a nonzero update to a unanimous group, scoring each response against a constant rather than the group mean \citep{nie2026gradient}.

\begin{figure}[H]
\centering
\includegraphics[width=0.86\textwidth]{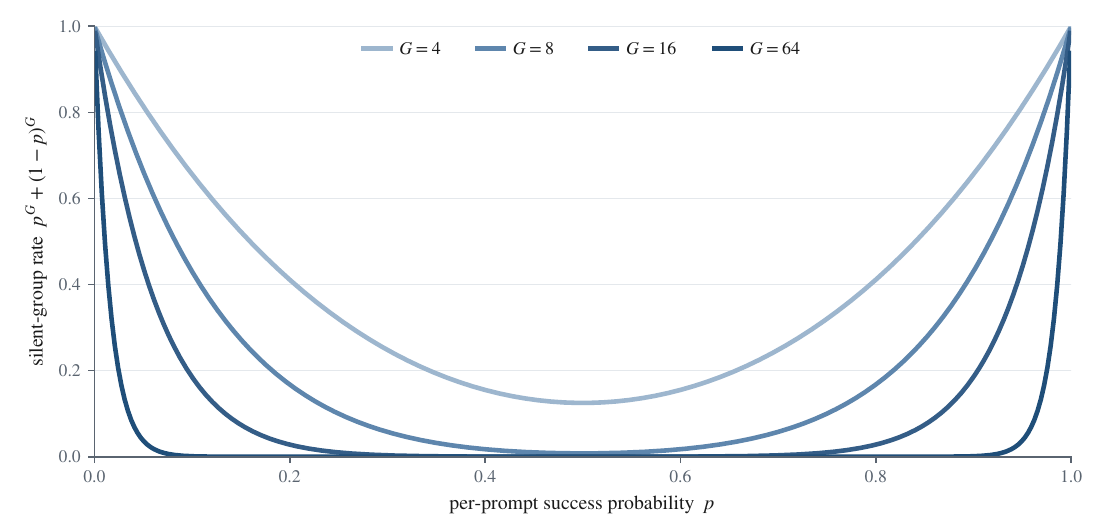}
\caption{The silent-group rate $p^{G}+(1-p)^{G}$ of \eqref{eq:degenerate}. Groups are most often silent near the easy and hard extremes, where all sampled answers tend to share the same reward. Larger $G$ reduces the silent mass in the interior but cannot remove the endpoints $p=0$ and $p=1$.}
\label{fig:finiteG}
\end{figure}

\paragraph{The same accounting on a real run.} The comparison with DAPO has two levels. The first is structural and requires no fitting. DAPO \citep{yu2025dapo} keeps only groups with $0<k<G$, so the discarded groups are exactly the silent mass in \eqref{eq:degenerate}. The fraction logged in Fig.~3b of DAPO (the avg@32 all-correct rate) is the all-correct component $\E_p[p^{32}]$, and the corresponding training-time discarded component at $G=16$ is $\E_p[p^{16}]$. The second level concerns the curve over training time. As training moves the Big-Math difficulty distribution toward mastery and the mean solve rate increases, $\E_p[p^{32}]$ traces the shape of the logged curve with one free timescale, at $R^2=0.92$ (bootstrap $0.91$--$0.96$; Figure~\ref{fig:dapo}, left). This is a consistency check rather than a unique fit: a generic two-parameter saturating curve fits the same points at $R^2=0.98$. The closed form nevertheless explains the qualitative shape without another mechanism. The discarded fraction rises because more prompts become all-correct, yet it saturates below one because a persistent contested mass remains (Figure~\ref{fig:dapo}, right).

\begin{figure}[H]
\centering
\includegraphics[width=\textwidth]{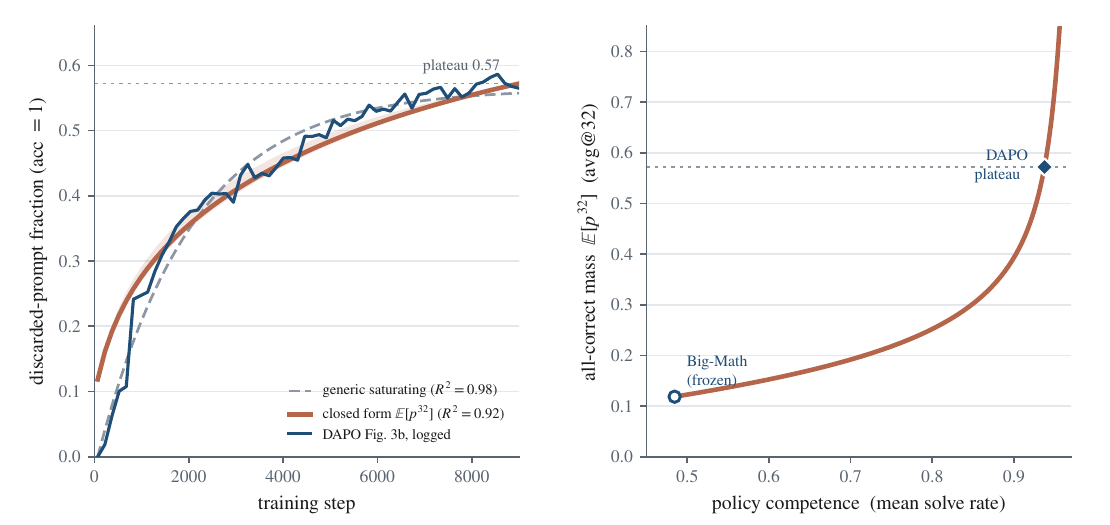}
\caption{DAPO's discarded-prompt fraction and the closed form. \emph{Left:} DAPO's logged accuracy-$1$ fraction (avg@32, Fig.~3b \citep{yu2025dapo}), the closed-form $\E_p[p^{32}]$ fit with bootstrap $10$--$90\%$ band, and a generic saturating baseline. \emph{Right:} the closed-form all-correct mass $\E_p[p^{32}]$ versus policy competence on Big-Math, with the frozen anchor and DAPO's plateau.}
\label{fig:dapo}
\end{figure}

\section{What the Division Optimizes}
\label{sec:bias}

Averaging the identity over groups recovers the known large-group picture and makes the difference between GRPO and Dr.\,GRPO exact. The single operation ``divide by $\sigma$'' does not merely rescale a step; it changes the implicit objective. By \eqref{eq:exactE}, as $G\to\infty$ the per-prompt GRPO gradient tends to $\sqrt{p(1-p)}$, the gradient of $2\arcsin\sqrt{p}$; Dr.\,GRPO tends to $p(1-p)$, the gradient of $p$. This is the surrogate-reward reading \citep{thrampoulidis2025advantageshaping}: GRPO ascends $\E_x[2\arcsin\sqrt{p_x}]$, while Dr.\,GRPO ascends the raw success rate $\E_x[p_x]$. Both objectives have the same endpoint (all prompts solved), but they allocate training pressure differently before that endpoint is reached. Figure~\ref{fig:objective} plots both: the left panel contrasts the two objectives, and the right overlays Monte-Carlo gradients on the closed forms. The remaining gap on the GRPO curve is precisely the finite-$G$ attenuation of \eqref{eq:exactE}.

\paragraph{The difficulty bias is the derivative.} \citet{liu2025drgrpo} (Dr.\,GRPO) observed empirically that standard-deviation normalization induces a ``question-level difficulty bias'': very easy and very hard prompts are over-weighted relative to medium ones. Removing the normalization removes that bias. The large-group limit makes the mechanism exact. The weight GRPO places on an incremental improvement at difficulty $p$ is the derivative of the transform,
\begin{equation}
\label{eq:weight}
w(p)\;=\;\frac{\partial}{\partial p}\,2\arcsin\sqrt p\;=\;\frac{1}{\sqrt{p(1-p)}},
\end{equation}
the same $1/\sigma$ that appears in the advantage \eqref{eq:adv}. This weight has a bathtub shape (Figure~\ref{fig:weight}): it is smallest at $p=\tfrac12$, where $w=2$, and diverges like $p^{-1/2}$ and $(1-p)^{-1/2}$ near $p=0$ and $p=1$. A prompt solved $5\%$ or $95\%$ of the time receives $w\approx4.6$, more than twice the weight of a coin-flip prompt. Dr.\,GRPO sets $w\equiv1$, which integrates back to the raw objective $p$. The difficulty bias is therefore not an accidental side effect. It is the price of using the variance-stabilizing transform: estimator variance is equalized by assigning extra weight to the extremes, where the raw success rate changes slowly.

\begin{figure}[H]
\centering
\includegraphics[width=\textwidth]{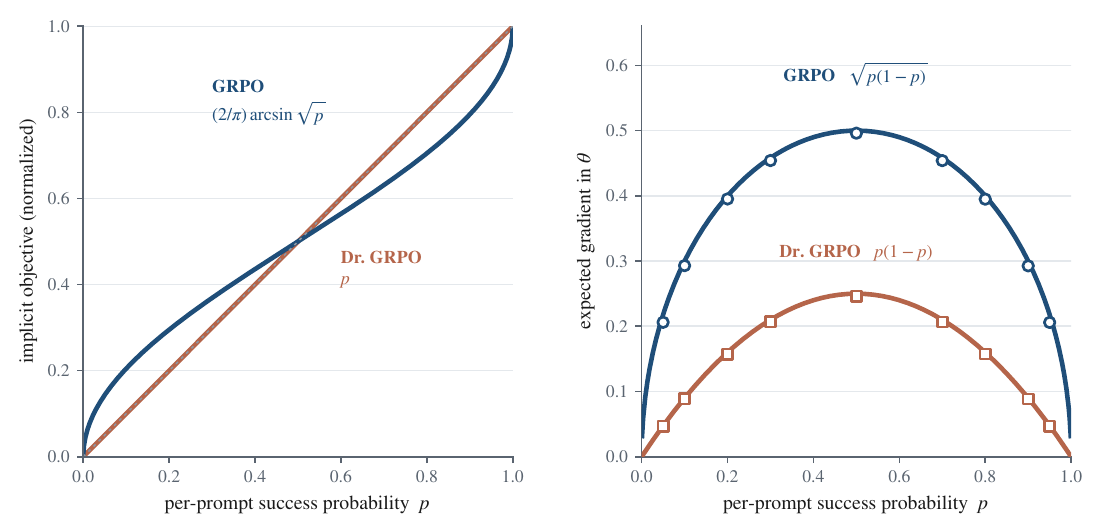}
\caption{The large-group limit of the identity. \emph{Left:} raw success rate $p$ (Dr.\,GRPO) against the arcsine transform $\frac2\pi\arcsin\sqrt p$ (GRPO). \emph{Right:} expected per-prompt gradient: closed-form curves with Monte-Carlo markers over groups of size $G=64$ on a Bernoulli-logit prompt.}
\label{fig:objective}
\end{figure}

\begin{figure}[H]
\centering
\includegraphics[width=0.86\textwidth]{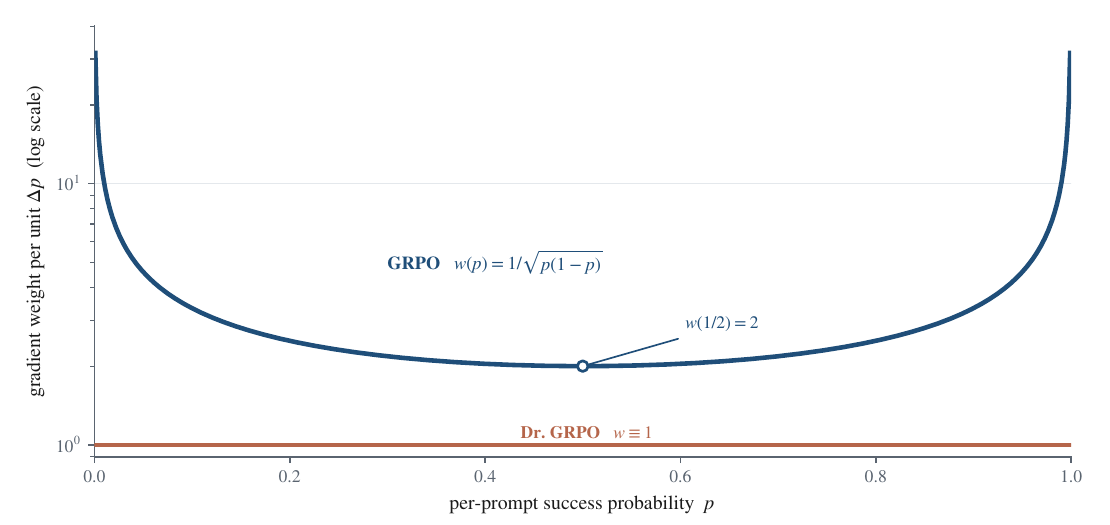}
\caption{The difficulty weight $w(p)=\partial_p\,2\arcsin\sqrt p=1/\sqrt{p(1-p)}$ of \eqref{eq:weight} (log scale), against the flat weight $w\equiv1$ of Dr.\,GRPO. GRPO assigns extra marginal weight to the easiest and hardest prompts, while Dr.\,GRPO weights each unit of raw success-rate improvement equally.}
\label{fig:weight}
\end{figure}

\section{The Lens on Real Difficulty Data}
\label{sec:realdata}

The closed forms above are functions of difficulty $p$, so their practical effect depends on the \emph{distribution} of $p$ in real data. Big-Math \citep{albalak2025bigmath} provides such a distribution: $N=215{,}608$ competition and textbook problems, each annotated with the empirical solve rate $\hat p=k/64$ of Llama-3.1-8B over $64$ rollouts.\footnote{The ungated \texttt{open-r1/Big-Math-RL-Verified-Processed} mirror is used. Llama-3.1-8B's per-prompt solve rate is treated as a stand-in for a policy's success probability $p_x$; it is a realistic, fixed difficulty distribution, not a specific training run. All numbers below are exact functions of the published $\hat p$ histogram.} The distribution (Figure~\ref{fig:realdata}, left) is sharply bimodal: $4.0\%$ of problems are never solved ($\hat p=0$) and $7.2\%$ are always solved ($\hat p=1$), with broad mass in between. Thus the extreme-difficulty regime is not a theoretical corner case; it occupies a visible part of the corpus, precisely where reweighting and silent groups matter most.

\paragraph{The reweighting, quantified.} In the large-group limit \eqref{eq:attenuation}, each prompt contributes per-prompt gradient mass $\propto\sqrt{p(1-p)}$ under GRPO and $\propto p(1-p)$ under Dr.\,GRPO, the gradients of the two implicit objectives. Normalizing each over the corpus gives the share of the total gradient budget spent at each difficulty (Figure~\ref{fig:realdata}, right, and Table~\ref{tab:mass}). Relative to Dr.\,GRPO, GRPO's standardization nearly doubles the share allocated to extreme-difficulty prompts ($13.9\%\to24.7\%$, a factor $1.78$) and reduces the share allocated to medium-difficulty prompts ($22.8\%\to17.5\%$). At finite $G$ the realized shift is milder, because the attenuation of Corollary~\ref{cor:attenuation} discounts the same extremes where silent groups are common; the controlled run of \S\ref{sec:experiment} measures $14.3\%\to17.0\%$ at $G=8$, approaching the large-group $24.7\%$ as $G$ grows. In either view, the difficulty bias is not marginal on this corpus; it reallocates a visible share of the training signal.

\paragraph{Silent groups, quantified.} Using the silent-group rate \eqref{eq:degenerate} on the same $\hat p$ histogram, Table~\ref{tab:degenerate} reports the fraction of prompts that yield no signal as a function of $G$. At the common choice $G=8$, $44\%$ of prompts produce no GRPO gradient; even at $G=64$, $17\%$ do not. An irreducible $11.2\%$ ($\hat p\in\{0,1\}$) are silent at every $G$: no amount of additional sampling creates a right-versus-wrong contrast for prompts that are always wrong or always right under the logged policy. This is the mass that dynamic sampling \citep{yu2025dapo} must replace by over-sampling. The closed form is not only an assumption: drawing size-$G$ groups directly from the $64$ logged rollouts per problem, with no Bernoulli model, gives a measured silent fraction within two points of $p^{G}+(1-p)^{G}$ for $G$ well below the rollout budget ($43\%$ against $44\%$ at $G=8$; Table~\ref{tab:degenerate}, lower row), the small gap being the finite-pool correction.

\begin{figure}[H]
\centering
\includegraphics[width=\textwidth]{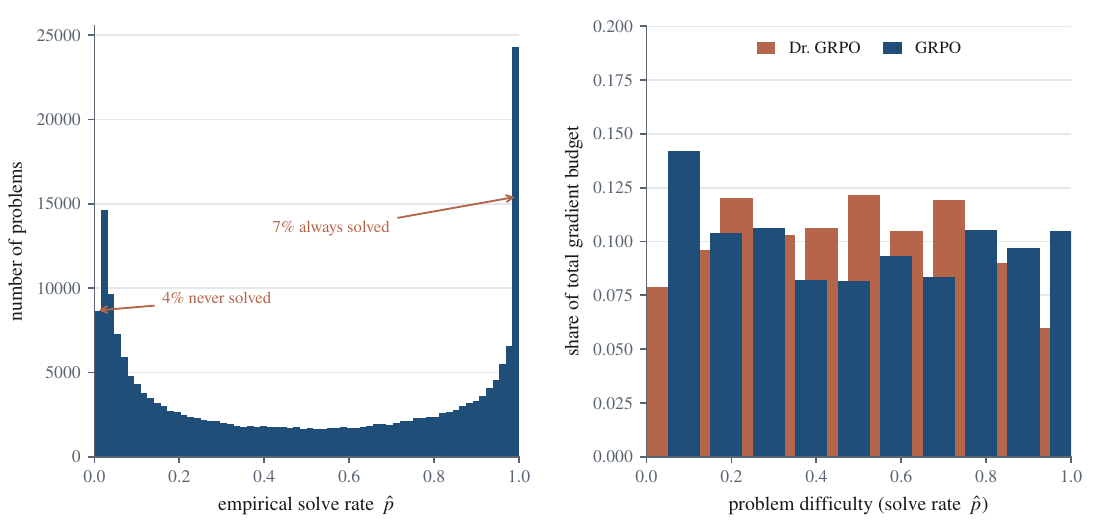}
\caption{Real difficulty distribution and the standardization reweighting (Big-Math, $N=215{,}608$). \emph{Left:} histogram of empirical solve rates $\hat p$ (Llama-3.1-8B, 64 rollouts). \emph{Right:} share of per-prompt gradient budget by difficulty under Dr.\,GRPO ($\propto p(1-p)$) and GRPO ($\propto\sqrt{p(1-p)}$).}
\label{fig:realdata}
\end{figure}

\begin{table}[t]
\centering
\begin{minipage}[t]{0.52\textwidth}\centering
\caption{Share of total gradient mass by prompt difficulty on Big-Math, GRPO versus Dr.\,GRPO.}
\label{tab:mass}
\small
\input{tables/generated_realdata_mass_table.tex}
\end{minipage}\hfill
\begin{minipage}[t]{0.45\textwidth}\centering
\caption{Fraction of Big-Math prompts whose group is silent ($\sigma=0$), by group size $G$: the closed form \eqref{eq:degenerate} against direct subsampling of the $64$ logged rollouts.}
\label{tab:degenerate}
\small
\input{tables/generated_realdata_degenerate_table.tex}
\end{minipage}
\end{table}

\section{The Lens in a Live Training Run}
\label{sec:experiment}

The preceding sections use static accounting: for a fixed difficulty $p$, the formulas predict gradient mass, silent groups, and group-size requirements. This section checks whether the same accounting remains visible during training. The setup is fully reproducible: $M=6{,}000$ prompts, each a one-dimensional Bernoulli-logit policy $p_x=\varsigma(\theta_x)$ with initial difficulty drawn from the real Big-Math solve-rate distribution, are trained for $150$ steps at group size $G=8$ under three advantage rules: GRPO (divide by $\sigma$), Dr.\,GRPO (do not divide), and DAPO (resample degenerate groups). At each step every prompt draws a fresh group, forms the advantage, and updates $\theta_x$ by the actual sampled gradient. Three predictions are compared with the measured run in Figure~\ref{fig:experiment}.

\paragraph{The silent-group rate predicts wasted groups.} Across the whole run the measured fraction of unanimous groups tracks the closed form $\E[p^{G}+(1-p)^{G}]$ evaluated at the current difficulty, with $R^2=0.999$ (Figure~\ref{fig:experiment}a). The fraction \emph{rises} toward one as training proceeds, because mastered prompts ($p\to1$) increasingly become all-correct. The same all-correct mass that climbs in DAPO's logged run (\S\ref{sec:degenerate}) is reproduced here from the identity alone.

\paragraph{The reweighting predicts which difficulties dominate.} Binning the realized gradient mass by difficulty, GRPO and Dr.\,GRPO match their finite-$G$ closed forms exactly (Figure~\ref{fig:experiment}b): GRPO spends $17.0\%$ of its gradient mass on the extreme prompts ($\hat p<0.1$ or $>0.9$) against Dr.\,GRPO's $14.3\%$. This is the finite-$G$ realized version of the large-group $24.7\%$ of \S\ref{sec:realdata}; the gap is the attenuation of Corollary~\ref{cor:attenuation}, which discounts the very extremes where silent groups concentrate.

\paragraph{The difficulty bias is visible in the trajectories.} Because the GRPO step scales like $\sigma$ while the Dr.\,GRPO step scales like $\sigma^{2}$, GRPO moves relatively faster where $\sigma$ is small, namely near the unanimous extremes. The effect is dynamic (Figure~\ref{fig:experiment}c): GRPO lifts the initially-hardest quartile of prompts to a mean solve rate of $0.99$ over the run, whereas Dr.\,GRPO reaches $0.88$; DAPO, which never spends an update on a silent group, is fastest, at the cost of $3.5\times$ oversampling concentrated on those same extremes. Thus the bias is not only a static reallocation of gradient budget. It changes the learning trajectory, especially for prompts that begin near the hard end of the difficulty scale.

\begin{figure}[H]
\centering
\includegraphics[width=\textwidth]{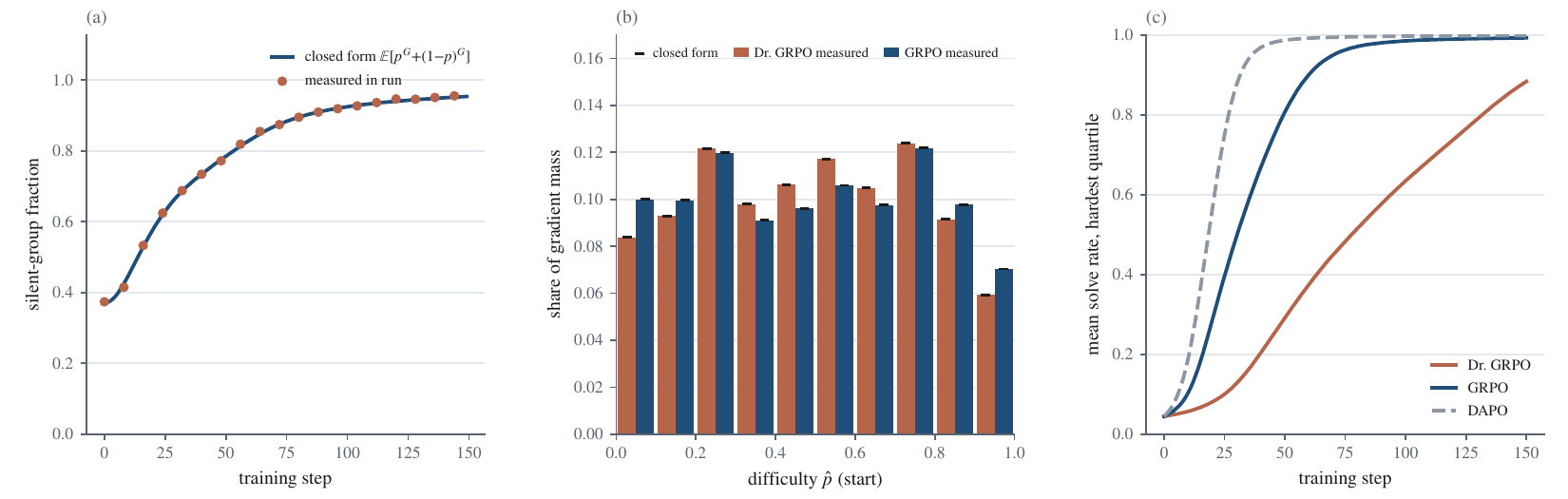}
\caption{The identity's predictions in a controlled GRPO run ($M=6{,}000$ Bernoulli-logit prompts, Big-Math initial difficulty, $G=8$). \emph{(a)} measured silent-group fraction against the closed form over training. \emph{(b)} realized gradient mass by difficulty, measured against the finite-$G$ closed form. \emph{(c)} mean solve rate of the initially-hardest quartile under GRPO, Dr.\,GRPO, and DAPO.}
\label{fig:experiment}
\end{figure}

\section{Related Work}
\label{sec:related}

\paragraph{GRPO and its critic-free relatives.} Group Relative Policy Optimization (GRPO) was introduced for mathematical reasoning \citep{shao2024deepseekmath} and popularized by R1-style training \citep{deepseekr1}. Its closest relatives drop the critic in favor of group or leave-one-out baselines: RLOO and ``back-to-basics'' REINFORCE \citep{ahmadian2024rloo,kool2019buy}, which the identity \eqref{eq:exact} subsumes through the rescaling of Proposition~\ref{prop:loo}. \citet{liu2025drgrpo} propose Dr.\,GRPO, identifying a length bias and a question-level difficulty bias in GRPO and removing both the length and standard-deviation normalizers; \S\ref{sec:bias} above expresses their difficulty bias exactly as $\partial_p\,2\arcsin\sqrt p$. DAPO \citep{yu2025dapo} adds dynamic sampling, which \S\ref{sec:degenerate} writes as removing the silent-group mass \eqref{eq:degenerate}. A parallel line keeps the group but replaces the group-relative center with a constant reference, so unanimous groups still produce a signal \citep{nie2026gradient}; a related reading recasts the same correct-minus-incorrect contrast as an implicit preference objective \citep{wu2025grpodpo}.

\paragraph{The large-group surrogate-reward view.} The asymptotics of the standard-deviation division are well understood. \citet{thrampoulidis2025advantageshaping}, in a study of Pass@K advantage shaping, establish that GRPO is (up to clipping) RLOO applied to the surrogate reward $2\arcsin\sqrt{p}$, and identify this with the binomial variance-stabilizing transform \citep{bartlett1936,anscombe1948}; related analyses read GRPO as a process reward model \citep{grpoprm2025}, give a local-curvature/adaptive-gradient account of the normalizer \citep{curvature2026}, and derive REINFORCE, PPO, and GRPO from a common expected-reward objective \citep{firstprinciples2026}. The group-standard-deviation identity \eqref{eq:exact} supplies the exact finite-group object whose group average \eqref{eq:exactE} becomes this surrogate gradient. It holds for a single group of any size $G$ in any dimension, depends on no baseline, and turns two open design choices into closed forms: the group-size law \eqref{eq:designrule} and the silent-group rate \eqref{eq:degenerate}. Those quantities are then measured on a real $215{,}608$-problem corpus in \S\ref{sec:realdata} and confirmed as training dynamics in \S\ref{sec:experiment}.

\paragraph{Variance-stabilizing transforms.} The arcsine transform $2\arcsin\sqrt p$ that stabilizes the variance of a binomial proportion is classical \citep{bartlett1936,anscombe1948} and standard in the analysis of binomial and count data. It appears here as the large-group limit \eqref{eq:exactE} of the per-group standard deviation, the object the GRPO update is shown to equal.

\paragraph{Lineage.} Methodologically this paper follows the tradition of explaining a widely used but under-theorized method by exhibiting a closed-form equivalence to a classical object, as \citet{levy2014implicit} did for word2vec \citep{mikolov2013word2vec}, and as the closed-form, search-free treatment of \citet{bay2026kore} does for scaling laws.

\section{Discussion}

The practical message is not that one existing method is universally best. The identity separates three design choices that are often discussed together. First, the objective choice: dividing by $\sigma$ gives GRPO the variance-stabilized arcsine objective, while removing the division gives Dr.\,GRPO the raw success-rate objective. Second, the compute choice: groups with $\sigma=0$ have no right-versus-wrong contrast, so DAPO-style dynamic sampling avoids spending updates on them. Third, the sampling choice: the group-size law says how large $G$ must be at a given difficulty before the finite group faithfully realizes the large-group signal.

\begin{itemize}
\item \textbf{The signal is disagreement.} By \eqref{eq:exact}, the prompt's instantaneous learning signal is computable from the sampled rollouts alone. A mixed group teaches because it contains both sides of the comparison. A unanimous group is silent because there is no within-prompt contrast.
\item \textbf{The methods are operations, not mysteries.} GRPO scales by $\sigma$, Dr.\,GRPO omits that scaling, and DAPO skips groups with $\sigma=0$. This makes the algorithmic landscape easier to reason about than a list of unrelated heuristics.
\item \textbf{Group size is a difficulty-dependent budget.} The fidelity \eqref{eq:fidelity} and silent-group rate \eqref{eq:degenerate} describe the two things $G$ buys: how much of the large-group gradient a retained group realizes, and how often a usable mixed group appears. A single uniform $G$ can be reasonable for simplicity, but the law explains why it underserves the easy and hard extremes.
\item \textbf{Standardization is a modeling choice.} Dividing by $\sigma$ trades raw-accuracy alignment for a variance-stabilized objective that gives more marginal weight to extreme difficulties. This can help rescue hard prompts, but it is also the source of the difficulty bias. The right choice depends on whether the training goal prioritizes raw success-rate alignment, hard-prompt pressure, or compute efficiency.
\end{itemize}

More broadly, reading GRPO through the group standard deviation suggests a template for analyzing other reward-shaping choices: start with one prompt, one sampled group, and one exact update. Rank-based advantages, quantile advantages, reward clipping, and length normalization should each admit the same kind of single-group accounting.

\paragraph{Limitations.} The identity \eqref{eq:exact} concerns the advantage construction under binary rewards and an on-policy first step. It deliberately sets aside clipping, the KL penalty, off-policy staleness, and non-binary rewards, each of which merits the same single-group treatment. The controlled run of \S\ref{sec:experiment} validates the closed forms as dynamics on a tractable Bernoulli-logit policy, where the score is scalar. A full language-model training loop that logs silent groups and per-difficulty gradient mass across $G$ is the natural next test; the theorem predicts what such a run should find because \eqref{eq:exact} holds in any dimension. Whether the extra weight that standardization places on the hardest prompts improves generalization beyond the trained difficulty range is a further, downstream question, continuous with the broader study of when models extrapolate past their training distribution \citep{bay2024generalization}. The DAPO comparison (\S\ref{sec:degenerate}) rests on the exact structural identification of the discarded mass; its time-evolution fit is a consistency check, not a unique prediction. A logged per-prompt solve-rate history would allow the all-correct curve to be predicted rather than fit. None of these limitations affects the core identity \eqref{eq:exact}, which is exact for any single group of any size and any policy dimension.

\section{Conclusion}

During RLVR training, a prompt teaches through the pattern of rewards assigned to its sampled answers. For binary rewards, this paper shows that one GRPO step on a prompt equals the reward standard deviation of its sampled group, $\sqrt{k(G-k)}/G$, times the contrast between correct and incorrect response scores. The result is finite-group, baseline-free, and valid in any policy dimension. A prompt pushes hardest when its answers are split and gives no signal when all sampled answers agree.

This identity makes the relationship among GRPO, Dr.\,GRPO, and DAPO concrete. GRPO divides by the group standard deviation and thereby follows the variance-stabilized arcsine objective. Dr.\,GRPO drops the division and returns to the raw success-rate objective. DAPO's dynamic sampling removes the groups where the same standard deviation is zero. The methods are therefore not three disconnected tricks; they are three operations on one number.

Because the identity holds at the finite group sizes used in training, two practical quantities become closed forms. The group-size law, $G\gtrsim1/(8\varepsilon p(1-p))$, gives the number of samples needed to realize a target fraction of the large-group gradient. The silent-group rate, $p^G+(1-p)^G$, gives the fraction of groups that provide no right-versus-wrong contrast. Both are borne out in a controlled run and on a $215{,}608$-problem difficulty corpus. The group standard deviation, long read as a normalizer, is the size of the learning signal itself.

\paragraph{Reproducibility.} \sloppy All code and data are available at \url{https://github.com/bay-yearick-lab/grpo-standard-deviation-identity}. All claims, including the general identity \eqref{eq:exact}, are verified numerically (\texttt{scripts/\allowbreak{}checks.py}), and the closed forms are exposed as a small diagnostic API (\texttt{scripts/\allowbreak{}grpo\_diagnostics.py}). The controlled run of \S\ref{sec:experiment} is \texttt{scripts/\allowbreak{}experiment\_dynamics.py}; figures and tables are regenerated from the public Big-Math solve-rate annotations by \texttt{scripts/\allowbreak{}make\_figures.py} and \texttt{scripts/\allowbreak{}analyze\_rollouts.py}. The DAPO comparison (\S\ref{sec:degenerate}) is digitized from the published figure by \texttt{scripts/\allowbreak{}digitize\_dapo.py} and stored with provenance in \texttt{data/dapo/}.

\bibliographystyle{unsrtnat}
\bibliography{references}

\end{document}

%% file: tables/generated_groupsize_table.tex
\begin{tabular}{lccc}
\toprule
 & \multicolumn{3}{c}{group size $G$ for fidelity $\varphi\ge 1-\varepsilon$} \\
\cmidrule(lr){2-4}
difficulty $p$ & $90\%$ & $95\%$ & $99\%$ \\
\midrule
$0.50$ & 7 (5) & 11 (10) & 51 (50) \\
$0.30$ / $0.70$ & 9 (6) & 14 (12) & 61 (60) \\
$0.10$ / $0.90$ & 22 (14) & 36 (28) & 144 (139) \\
$0.05$ / $0.95$ & 42 (26) & 69 (53) & 273 (263) \\
\bottomrule
\end{tabular}

%% file: tables/generated_realdata_mass_table.tex
\setlength{\tabcolsep}{5pt}
\begin{tabular}{lcc}
\toprule
gradient budget on & Dr.\,GRPO & GRPO \\
 & $p(1{-}p)$ & $\sqrt{p(1{-}p)}$ \\
\midrule
extreme ($\hat p{<}.1$ or ${>}.9$) & 13.9\% & 24.7\% \\
medium ($.4{\le}\hat p{\le}.6$) & 22.8\% & 17.5\% \\
\bottomrule
\end{tabular}

%% file: tables/generated_realdata_degenerate_table.tex
\setlength{\tabcolsep}{4pt}
\begin{tabular}{lccccc}
\toprule
group size $G$ & 4 & 8 & 16 & 32 & 64 \\
\midrule
closed form & 59\% & 44\% & 32\% & 23\% & 17\% \\
subsampled & 59\% & 43\% & 30\% & -- & -- \\
\bottomrule
\end{tabular}